\documentclass[9pt,shortpaper,twoside,web]{ieeecolor}
%\documentclass[12pt,draftcls,onecolumn]{IEEEtran}
%\pdfminorversion=4in
\usepackage{generic}
\usepackage{cite}
\usepackage{amsmath,amssymb,amsfonts}
\usepackage{mathtools, cuted}
\usepackage{algorithmic}
\usepackage{graphicx}
\usepackage{textcomp}
\usepackage[utf8]{inputenc}
\usepackage{comment}
\usepackage{amssymb}

\usepackage{amsthm}
\usepackage{empheq}
\usepackage{hyperref}
\bibliographystyle{plain}
\theoremstyle{plain}
\newtheorem{thm}{Theorem}[section]
\newtheorem{lem}[thm]{Lemma}

\newtheorem{cor}{Corollary}

\theoremstyle{defn}

\theoremstyle{rmk}
\newtheorem{rmk}{Remark}

\usepackage{amsmath}

\usepackage{enumitem}
\def\BibTeX{{\rm B\kern-.05em{\sc i\kern-.025em b}\kern-.08em
    T\kern-.1667em\lower.7ex\hbox{E}\kern-.125emX}}
\markboth{IEEE Transactions on Automatic Control}
{Prasenjit \MakeLowercase{\textit{et al.}}: On the function approximation error for risk-sensitive reinforcement learning}
\begin{document}
\title{On the function approximation error for risk-sensitive reinforcement learning}
\author{Prasenjit Karmakar and Shalabh Bhatnagar
%, \IEEEmembership{Fellow, IEEE}, Second B. Author, and Third C. Author, Jr., \IEEEmembership{Member, IEEE}
%\thanks{This paragraph of the first footnote will contain the date on 
%which you submitted your paper for review. It will also contain support 
%information, including sponsor and financial support acknowledgment. For 
%example, ``This work was supported in part by the U.S. Department of 
%Commerce under Grant BS123456.'' }
\thanks{ 
%the authors' current affiliations, including current address and e-mail. For 
Prasenjit Karmakar is with the Department of Electrical Engineering, Technion, Israel (e-mail: k.prasenjit@campus.technion.ac.il, Contact No: +9720587706087). This research was conducted when Karmakar was a Ph.D. student in Indian Institute of Science Bangalore}
\thanks{Shalabh Bhatnagar is with the Department of Computer Science and Automation and Robert Bosch Centre for Cyber Physical Systems, IISC Bangalore (e-mail: shalabh@iisc.ac.in, Phone: 91(80) 2293-2987
Fax: 91(80) 2360-2911).}}
%\thanks{T. C. Author is with 
%the Electrical Engineering Department, University of Colorado, Boulder, CO 
%80309 USA, on leave from the National Research Institute for Metals, 
%Tsukuba, Japan (e-mail: author@nrim.go.jp).}}

\maketitle

\begin{abstract}
We obtain several informative error bounds on function approximation for the policy evaluation algorithm proposed by Basu et al. for the risk-sensitive cost criteria represented using exponential utility. The novelty of our approach is that we use the irreducibility of a Markov chain (existing Bapat and Lindqvst inequality as well as the \textit{new} bound using Perron-Frobenius eigenvectors) to get the new bounds whereas the earlier work used spectral variation bound which holds for any matrix.
\end{abstract}

\begin{IEEEkeywords}
risk-sensitive cost; function approximation; Perron-Frobenius eigenvalue.
\end{IEEEkeywords}

\section{Introduction}
\label{sec1}
The most familiar metrics in infinite horizon sequential decision problems are additive costs such as the  
discounted cost and the long-run average cost 
respectively. However, there is another cost criterion namely 
multiplicative cost (or risk-sensitive cost as it is better known) which has 
%The latter is useful in the scenarios where discounting is inappropriate or there is no cost-free absorbing state. 
%However, under the optimal policy the 
%long-run average cost will be minimum but any random long-run cost may be
%much larger. In order to incorporate risk into our consideration we need to
%assign utility function associated with the actual cost. 
%For cost functions, exponential function is many times used as utility function. The reason is that 
%it has 
important connections with dynamic games and robust
control and is popular in certain applications, particularly related 
to finance where it offers the advantage of `penalizing all
moments', so to say, thus capturing the `risk' in addition to
mean return (hence the name).
%Then the risk-sensitive cost
%becomes the certainty equivalent of the expected exponential utility 
For details see \cite{risk}. For a concrete example where such cost-criteria arise see the 
example of investor's portfolio in \cite[Section 3]{basu}.

Like other cost criteria, one can propose and justify
iterative algorithms for solving the dynamic programming
equation for the risk-sensitive setting \cite{bormn}. The issue we are interested in here is how to
do so, even approximately, when the exact model is either
unavailable or too unwieldy to afford analysis, but on
the other hand simulated or real data is available easily,
based on which one may hope to `learn' the solution in
an incremental fashion. 

One important point to note here is that the usual simulation based technique of calculating average cost  
does not work when the objective is a risk-sensitive cost. The reason is that average cost is defined as
\begin{align}
\lim_{n \to \infty}\frac{1}{n}E[\sum_{i=0}^{n-1}c(X_{i})], \nonumber 
\end{align}
where $c(i)$ is the cost of state $i$ and  $X_n$, $n\geq 0$ is 
an irreducible finite state Markov chain. Therefore the following iterative algorithm will almost surely converge to 
the average cost (under a given policy):
\begin{align}
\label{avg}
\theta_{n+1} = \theta_n +a(n)\left[c(X_n) - \theta_n\right], 
\end{align} where the step sizes $a(n), n\geq 0$ satisfy the Robbins-Monro conditions. 
This follows from the ergodic theorem for irreducible Markov chains as well as the convergence analysis of 
stochastic approximation with Markov noise \cite{borkar_markov}. On the contrary one needs to apply 
multiplicative ergodic theorem \cite{bmrisk} when the cost is risk-sensitive. However, this
does not have any closed-form limit. 
Moreover, one cannot even write iterative  algorithms like (\ref{avg}) in this setting because of the non-linear 
nature of the cost.
Due to the same 
reason, methods of \cite{marbach} also do not work in this setting when one is solving the full control 
problem.

This takes us into the domain of
reinforcement learning. In \cite{borkarq} and \cite{borkar_actor}, 
Q-learning and actor-critic methods have been proposed respectively for such a cost-criterion. 
These are `raw’ schemes in the sense that there is
no further approximation involved for the value function or policy. Since complex control
problems lead to dynamic programming equations in very
large dimensions (`curse of dimensionality'), one often
looks for an approximation based scheme. One such learning algorithm with function approximation is
proposed in \cite{basu}. 

In such approximation architectures an important problem is to obtain a good error bound 
for the approximation. The \textit{utility of such an error bound} is that the more one can improve the bound the more the 
approximated cost will be an accurate estimate of the actual cost. This has been pointed out by Basu et al. in the future 
work section of \cite{basu}, as well as by Borkar in the future work section of \cite{borkar_mont, borkar_conf}. The mathematical problem is to find a bound between largest eigenvalues of two matrices. While \cite{basu} provides such a bound
when the problem is policy evaluation, 
it is also mentioned there that the bound obtained 
is not good when the state space is large since the \textit{difference term} vanishes for large state space. In this technical note we investigate problems with the existing bound and  
then provide improved bounds. The main idea is to use Perron-Frobenius eigenvectors (they exist if we assume an irreducible Markov chain) to get the \textit{new bounds}, also to use the classical Bapat's inequality \cite{bapat}. One important novelty of the \textit{new} bound we provide compared to the existing Bapat and Lindqvist inequality \cite{henry} is that it is derived from an \textit{equality} which shows that 
the difference between the largest eigenvalues of two matrices $A$ and $B$ is the ratio of the inner products $\langle (A-B)x_A, x_{B^T}\rangle$ and $\langle x_A, x_{B^T}\rangle$. Here $x_A$ is the Perron-Frobenius 
eigenvector of $A$. More specifically, 
\begin{align}
\lambda - \mu &= \frac{\langle(A-B)x_A, x_{B^T}\rangle}{\langle x_A, x_{B^T}\rangle} =  \frac{\langle (A-B)x_{A^T}, x_{B}\rangle}{\langle x_{A^T}, x_{B}\rangle} \nonumber \\
&= \frac{\langle (A^T-B)x_A, x_{B}\rangle}{\langle x_A, x_{B}\rangle} = \frac{\langle(A-B^T)x_A, x_{B}\rangle}{\langle x_A, x_{B}\rangle}. \nonumber
\end{align}
Here $\lambda$ and $\mu$ are the largest eigenvalues of $A$ and $B$ matrices respectively. The novelty of our approach is that we make use of the irreducibility of Markov chain to get the new bounds whereas the earlier work by \cite{basu} used  spectral variation bound which is true for any matrix. Hence, all our bounds have \textit{difference terms} for large state space. We show that good approximations are captured using our bounds whereas the earlier bound would 
infer them as bad approximation.

The paper is organized as follows: Section \ref{sec2} describes the preliminaries and background to the problem 
considered. Section \ref{sec3} discusses the shortcomings of the bound proposed by \cite{basu}. Section \ref{sec4} describes the error bounds (\textit{new} error bound using Perron-Frobenius eigenvector and the error bounds using  the existing Bapat and Lindqvist inequality) as well as how 
they compare with each other and with the state of the art bound. In Section \ref{sec5} we 
provide an application of our results in the setting of the example of investor portfolio of \cite{basu}.
Section \ref{sec6} 
shows the theoretical conditions under which there is no error. This section also describes a temporal 
difference learning algorithm with its convergence analysis under the risk-sensitive setting. Section \ref{sec7} presents conclusions and some future 
research directions.
\section{Preliminaries and Background}
\label{sec2}
We begin by recalling the risk-sensitive framework. Consider an irreducible
aperiodic Markov chain $\{X_n\}$ on a finite state space $S = \{1, 2, \dots, s\}$, with transition matrix 
$P = [p(j|i)], i,j \in S$.
We are concerned here with the problem of prediction in controlled Markov chains i.e. the goal is to find the value of a given stationary policy and for which we shall consider a policy evaluation algorithm. Thus we have suppressed the explicit control dependence. 
Let $c:S \times S \to \mathbb{R}$ denote a prescribed `running cost' function and $C$ be the $s \times s$ matrix whose $(i,j)$-th 
entry is $e^{c(i,j)}$. The aim is to evaluate 
\begin{align}
\limsup_{n \to \infty} \frac{1}{n}\ln\left(E[e^{\sum_{m=0}^{n-1}c(X_m, X_{m+1})}]\right). \nonumber 
\end{align}That this limit exists follows from the
multiplicative ergodic theorem for Markov chains (see Theorem 1.2 of Balaji and Meyn (2000) \cite{bmrisk}, the
sufficient condition (4) therein is trivially verified for the finite state case here). Associated with
this is the multiplicative Poisson equation (see, e.g., Balaji and Meyn (2000) \cite[Theorem 1.2 (ii)]{bmrisk}):
We know from \cite{bmrisk} that there exists $\lambda >0$ and $V : S \to \mathbb{R^+}$ such that the  multiplicative Poisson 
equation holds as follows: 
\begin{align}
V(i) = \frac{\sum_j p(j|i)e^{c(i,j)}V(j)}{\lambda}. \nonumber 
\end{align}
%As shown in (5) of Balaji and Meyn (2000) \cite{bmrisk}, this can be explicitly given as
%\begin{align}
%V(i) = \lim_{s \to \infty}E[e^{\sum_{m=0}^{s-1}(c(X_m, X_{m+1})-\ln(\lambda))|X_0 =i], i \in S, 
%\end{align}
%where, for $\tau_{i_0} = \min\{m>0: X_m =i_0\}$ with a prescribed $i_0 \in S$, $\lambda$ is the unique solution to   

An explicit expression for $V(\cdot)$ can be found from (5) in \cite{bmrisk} as in the following:
\begin{align}
V(i) = \lim_{n\to \infty} E\left[e^{\sum_{m=0}^{n-1}(c(X_m, X_{m+1}) -\ln(\lambda))} | X_0 =i\right],    
\end{align}
where $\lambda$ is the unique solution to 
\begin{align}
E[e^{\sum_{m=0}^{\tau_{i_0}-1}(c(X_m, X_{m+1})-\ln(\lambda))}|X_0 =i_0] =1.  \nonumber 
\end{align}
and $\tau_{i_0} = \min\{m>0: X_m =i_0\}$ for a prescribed $i_0 \in S$.
Thus $\lambda$ and  $V$ are respectively the Perron-Frobenius eigenvalue and eigenvector of the non-negative matrix 
$[[e^{c(i,j)}p(j|i)]]_{i,j \in S}$, whose existence is guaranteed by the Perron-Frobenius theorem. 
Furthermore,
under our irreducibility assumption, $V$ is specified uniquely up to a positive multiplicative scalar and $\lambda$ is
uniquely specified.
Also, the risk-sensitive 
cost defined as above
is $\ln\lambda$.

We know from \cite{bormn, borkarq,borkar_actor} that in the case of value iteration (with both dynamic programming 
and reinforcement learning), that the $i_0$-th component of the sequence of the iterates will converge to $\lambda$ where $i_0$ is a given fixed state in $S$. We consider a linear function approximation for the value function $V(i)$ where $V(i) \simeq \sum_{k=1}^M r^k\phi^k(i) = \phi^T(i) r$. Here $r = \left(r^1, \dots, r^M\right)^T$   is the vector of parameters $r^1, \dots, r^M$ and $\phi^k(\cdot), 1 \leq k \leq M$ are the basis functions or features chosen a priori. Also, $\phi(i) = \left(\phi^1(i), \dots, \phi^M(i)\right)^T$ denotes the feature of state $i$. Let $\Phi$ be an $s \times M$ matrix whose $(i,k)$-th entry is  $\phi^k(i)$ for $1 \leq i \leq s$ and $1 \leq k \leq M$. Let $I$ be the  $M \times M$ identity matrix and let $A_n$ and $B_n$, $n \geq 0$ be defined as 
\begin{align}
A_n = \sum_{m=0}^n e^{c(X_m, X_{m+1})}\phi(X_m)\phi^T(X_{m+1}),  \nonumber \\
B_n = \sum_{m=0}^n \phi(X_m)\phi^T(X_{m}), \nonumber
\end{align}
respectively. The linear function approximation version in \cite{basu} provides the following parameter update for $n \geq 0$:
\begin{align}
\label{lspe}
r_{n+1} = r_n + a(n)\left(\frac{B_n^{-1}A_n}{\max(\phi^T(i_0)r_n, \epsilon)}-I \right)r_n, 
\end{align}
where $\epsilon > 0$ is fixed (required to make the denominator of the first term inside the bracket non-zero). 
\begin{comment}
\begin{align}
\nonumber
& \epsilon > 0, \mbox{~~ is fixed, ~~} \\ \nonumber
&V(i) \simeq \sum_{k=1}^M r^k\phi_k(i) = \phi^T(i) r, \\ \nonumber
&r = \left(r^1, \dots, r^M\right)^T \text{~is the vector of parameters $r^1, \dots, r^M$}, \\ \nonumber
&\phi^k(\cdot), 1 \leq k \leq M, \text{~are the basis functions or features chosen a priori}, \\ \nonumber
&\phi(i) = \left(\phi^1(i), \dots, \phi^M(i)\right)^T \mbox{~~is the feature of state $i$~~}, \\ \nonumber
&\Phi = \mbox{an~}s \times M \text{~matrix whose~} (i,k)\text{~-th entry is~} \phi^k(i) \text{~for} \\ \nonumber  
&1 \leq i \leq s \text{~and~} 1 \leq k \leq M, \\ \nonumber
%&\phi(i) = \text{~feature of state $i$}, \\ \nonumber
&A_n = \sum_{m=0}^n e^{c(X_m, X_{m+1})}\phi(X_m)\phi^T(X_{m+1}), \\ \nonumber
&B_n = \sum_{m=0}^n \phi(X_m)\phi^T(X_{m}), \\ \nonumber
&I = M \times M \mbox{~~identity matrix~~.}
\end{align}
\end{comment}
We also know from \cite[Theorem 5.3]{basu} that under the crucial assumption on the feature matrix that $\phi^k(\cdot), 1 \leq k \leq M$ are orthogonal vectors in the positive cone of $\mathcal{R}^s$ , and the submatrix of $P$ corresponding to $\cup_k \{i: \phi^k(i) > 0\}$ is irreducible (under this, $\Phi$ has the full rank $M$, see ($\dagger$) on p~883 in \cite{basu} for details)
the iterates 
$r_n$ 
satisfy the following: 
\begin{align}
\phi^T(i_0)r_n \to \mu,  \nonumber  
\end{align}
where $\mu >0$ is a Perron-Frobenius eigenvalue of the non-negative matrix $Q=\Pi \mathcal{M}$ with
$\Pi = \Phi(\Phi^T D \Phi)^{-1}\Phi^TD$ and $\mathcal{M} = C\circ P$ (unlike \cite{basu} we consider only a synchronous 
implementation for ease of understanding). Here
$D$ is a diagonal matrix with the $i$-th diagonal entry being $\pi_i$ where 
$\pi= \left(\pi_1, \pi_2, \dots, \pi_s\right)^T$ is the stationary distribution of $\{X_n\}$. 
Also, $e^{c(i,j)}p(j|i)$ is the $(i,j)$-th entry of $C \circ P$ where `$\circ$' denotes the component-wise product of two matrices with identical row and column dimensions. 
Assume that $\gamma_{ij}$ and $\delta_{ij}$ are the $(i,j)$-th entries of the matrices $C \circ P$ and $\Pi \mathcal{M}$ respectively. It is easy to check that 
\begin{align}
\label{gama}
\gamma_{ij} = e^{c(i,j)} p(j|i), \delta_{ij} = \frac{\phi^{k(i)}(i) \sum_{l =1}^{s} \phi ^{k(i)}(l) \pi_l \gamma_{lj}}{\sum_{m=1}^{s}{\phi^{k(i)}(m)}^2 \pi_m}
\end{align}

\begin{comment}
where 
$$ D=
\begin{bmatrix}
\pi_1       & 0 & \dots & 0 \\
     0       & \pi_2 & \dots & 0 \\
    \hdotsfor{4} \\
    0       & 0 & \dots & \pi_s
\end{bmatrix}, C = 
\begin{bmatrix}
e^{c(1,1)}       & e^{c(1,2)}  & \dots & e^{c(1,s)} \\
     e^{c(2,1)}       & e^{c(2,2)} & \dots & e^{c(2,s)} \\
    \hdotsfor{4} \\
    e^{c(s,1)}       & e^{c(s,2)} & \dots & e^{c(s,s)}
\end{bmatrix}
$$
\end{comment}

Therefore 
$\ln\mu$ serves as an approximation to the original risk-sensitive cost $\ln \lambda$. 
\textit{Our aim is to investigate the difference between these two i.e., $\ln(\frac{\lambda}{\mu})$, a quantity that plays the role of the error metric}.

\begin{rmk}
Throughout the paper the results are stated in general for matrices $A$ and $B$ with the largest eigenvalues of $A$ and $B$ as 
$\lambda >0$ 
and $\mu >0$ respectively. The entries of $A$ and $B$ should be clear from the context. 
\end{rmk}

\section{Related work and shortcomings}
\label{sec3}
Let $\|A\|$ be the operator norm of a matrix defined by $\|A\| =\inf\{c >0: \|Av\| \leq c\|v\| ~~\forall v\}$ where 
$\|v\|=\sum_{i=1}^s|v_i|$. Let $A = C\circ P$ and $B= \Pi \mathcal{M}$. 
The following bound was given in \cite{basu}: 
\begin{align}
\label{spect}
\ln\left(\frac{\lambda}{\mu}\right) \leq \ln \left(1+ \frac{(\|A\| + \|B\|)^{1-\frac{1}{s}} \|A - B\|^{\frac{1}{s}}}{\mu}\right),
\end{align}
using the spectral variation bound from \cite[Theorem VIII.1.1]{bhatia}, namely that if 
$A$ and $B$ are two $s \times s$ matrices with 
eigenvalues $\alpha_1, \dots, \alpha_s$ and $\beta_1, \dots, \beta_s$ respectively, then 
\begin{align}
\label{spect_bound}
\max_j \min_i |\alpha_i - \beta_j| \leq (\|A\| + \|B\|)^{1-\frac{1}{s}} (\|A-B\|)^{\frac{1}{s}}. 
\end{align}
This follows from the observation that if $\alpha_1 > 0$ and $\beta_1 >0$ 
are the leading eigenvalues of $A$ and $B$ respectively  
and $\alpha_1 \leq \beta_1$, then $|\alpha_1 - \beta_1| < \max_j \min_i |\alpha_i - \beta_j|$. Similar thing
happens for the case $\alpha_1 > \beta_1$ except that the roles of $\alpha_i$ and $\beta_j$ and hence the roles of 
$A$ and $B$ get reversed thus keeping the right hand side (R.H.S) of (\ref{spect_bound}) the same.

An important point to note is that when  $\alpha_1 \leq \beta_1$, the fact that $\beta_1$ is the leading eigenvalue of $B$ 
is not used. Same thing happens for the other case where $\alpha_1$ replaces $\beta_1$. 

Another important point above is that for large $s$ the bound given above cannot differentiate between the cases with two 
pairs of matrices $(A_1,B_1)$ and $(A_2,B_2)$ such that $\|A_1\| + \|B_1\| = \|A_2\| + \|B_2\|$ 
but $\|A_1 - B_1\|$ and $\|A_2 - B_2\|$ vary dramatically. 
%as $\|A - B\|^{1/s} \to 1$ as $s \to \infty$ as long as $A \neq B$. 
This will be clear from the following simple example: Consider 
$A_1 = (x_{ij})_{s \times s}, B_1 = (y_{ij})_{s \times s}, A_2= (z_{ij})_{s \times s}, B_2 = (w_{ij})_{s \times s}$. Suppose 
$x_{ij} =p, y_{ij} =q, z_{ij} = p', w_{ij} = q'$ for $i,j = 1,2,\dots, s$, with $p + q = p' + q'$, $p'-q' > 0$ and 
$p-q > 0$. It is easy to see that $\|A_1\| + \|B_1\| = \|A_2\| + \|B_2\|$ and $r(A_1) = ps, r(B_1) = qs, r(A_2) = p's, r(B_2) = q's$.
Clearly, $p - q \neq p'-q'$ unless $pq = p'q'$. Here $r(A)$ denotes the Perron-Frobenius eigenvalue of matrix $A$.

In summary, when one is giving a bound between two quantities, the R.H.S 
should have terms involving the difference. However this 
does not occur while using spectral variation bound in the above example  as $(p-q)^{\frac{1}{s}}$ 
will converge to $1$ as $s \to \infty$. In Section \ref{sec5}, using the above example, we show that the new 
error bounds that we obtain contain always the \textit{difference terms} irrespective of the state space size $s$.
\newpage
\begin{strip}
\begin{empheq}[box=\fbox]{align}
       \ln\frac{\lambda}{\mu} &\leq \frac{1}{\lambda}\sum_{i,j=1}^s                                      e^{c(i,j)}p(j|i)x_i y_j \left[c(i,j) + \ln p(j|i)- 
       \ln\phi^{k(i)}(i) -\ln\left(\sum_{l=1}^s \phi^{k(i)}(l)\pi_la_{lj} \right)+ \ln\left(\sum_{m=1}^s{\phi^{k(i)}(m)}^2\pi_m\right)\right],        \label{first} \\
       \nonumber \\
 \ln\frac{\lambda}{\mu} &\leq \ln\left(\lambda\right) -\ln \left( \lambda - \sum_{i=1}^{s}x_i y_i \left(e^{c(i,i)}p(i|i)- \frac{\phi^{k(i)}(i)\sum_{l=1}^s \phi^{k(i)}(l)\pi_le^{c(l,i)}p(i|l)}{\sum_{m=1}^s{\phi^{k(i)}(m)}^2\pi_m}\right) - \right. \nonumber \\ 
 &\left. \sum_{i \neq j} e^{c(i,j)}p(j|i)x_i y_j \left(c(i,j) + \ln p(j|i)  + \ln\left(\frac{\sum_{m=1}^{s} {\phi^{k(i)}(m)}^2 \pi_m}{\sum_{l=1}^s \phi^{k(i)}(l) \pi_l e^{c(l,j)}p(j|l)}\right) - \ln \phi^{k(i)}(i) \right)\right), \label{second} \\
 \nonumber \\
 \ln\frac{\lambda}{\mu} &\leq \ln \left(1+ \frac{1}{\mu}\left(\sum_{i=1}^{s}x_i y_i \left(e^{c(i,i)}p(i|i)- \frac{\phi^{k(i)}(i)\sum_{l=1}^s \phi^{k(i)}(l)\pi_le^{c(l,i)}p(i|l)}{\sum_{m=1}^s{\phi^{k(i)}(m)}^2\pi_m}\right) + \right.\right. \nonumber \\ 
&\left. \left. \sum_{i \neq j} e^{c(i,j)}p(j|i)x_i y_j \left(c(i,j) + \ln p(j|i)  + \ln\left(\frac{\sum_{m=1}^{s} {\phi^{k(i)}(m)}^2 \pi_m}{\sum_{l=1}^s \phi^{k(i)}(l) \pi_l e^{c(l,j)}p(j|l)}\right)- \ln \phi^{k(i)}(i) \right)\right)\right), \label{third} \\
\nonumber \\
\frac{r(A)}{r(B)} &\leq  \Pi_{i,j=1} ^ s \left(\frac{a_{ij}}{b_{ij}}\right)^{\frac{a_{ij}x_i y_j}{r(A)}}, \label{ineq1} \\
\nonumber \\
\frac{r(A)}{r(B)} &\leq  \frac{r(A)}{r(A) + \sum_{i=1}^{s}x_i y_i (b_{ii}-a_{ii}) + \sum_{i \neq j} a_{ij}x_i y_j \ln\left(\frac{b_{ij}}{a_{ij}}\right)}, \label{ineq2} \\
\nonumber \\
\frac{r(A)}{r(B)} &\leq  1 + \frac{1}{r(B)}\left[\sum_{i=1}^{s}x_i y_i (a_{ii}-b_{ii}) + \sum_{i \neq j} a_{ij}x_i y_j \ln\left(\frac{a_{ij}}{b_{ij}}\right)\right], \label{ineq3} \\
\nonumber \\
%&\min_i \sum_j a_{ij} \leq r(A) \leq \max_i \sum_j a_{ij}. \label{ineq4} 
&\min_i \sum_j e^{c(i,j)}p(j|i) > \sum_{i}x_i y_i\left(e^{c(i,i)}p(i|i)- \frac{\phi^{k(i)}(i)\sum_{l=1}^s \phi^{k(i)}(l)\pi_le^{c(l,i)}p(i|l)}{\sum_{m=1}^s{\phi^{k(i)}(m)}^2\pi_m}\right) + \sum_{i \neq j} x_i y_j e^{c(i,j)}p(j|i) \left(c(i,j) + \right.  \nonumber \\ 
& \left. \ln p(j|i)  +  \ln\left(\frac{\sum_{m=1}^{s} {\phi^{k(i)}(m)}^2 \pi_m}{\sum_{l=1}^s \phi^{k(i)}(l) \pi_l e^{c(l,j)}p(j|l)}\right)- \ln \phi^{k(i)}(i) \right), \label{as1} \\
\nonumber \\
& \forall i, e^{c(i,i)} p(i|i)\left(\sum_{m=1}^s {\phi^{k(i)}(m)}^2 \pi_m  - {\phi^{k(i)}(i)}^2 \pi_i\right) = \phi^{k(i)}(i) \sum_{l=1, l\neq i}^{s} \phi^{k(i)}(l) \pi_l a_{li}, \label{as2} \\
\nonumber \\
& \forall i \neq j, e^{c(i,j)} p(j|i)\left(\sum_{m=1}^s {\phi^{k(i)}(m)}^2 \pi_m  - {\phi^{k(i)}(i)}^2 \pi_i\right) = \phi^{k(i)}(i) \sum_{l=1, l\neq i}^{s} \phi^{k(i)}(l) \pi_l a_{lj}, \label{as3} \\
\nonumber \\
& \exists i \mbox{~~s.t~~} e^{c(i,i)} p(i|i)  > \max_{1 \leq l \leq s, l\neq i} e^{c(l,i)} p(i|l) \mbox{~~or,} \label{as5}\\
\nonumber \\
&\exists i \mbox{~~s.t~~} e^{c(i,i)} p(i|i)  < \min_{1 \leq l \leq s, l\neq i} e^{c(l,i)} p(i|l) \label{as6}
\end{empheq}
\end{strip}

\section{New error bounds}
\label{sec4}
In this section we present
examples where the bounds for $\ln \frac{\lambda}{\mu}$ using Bapat, Lindqvist inequality 
and  the
\textit{new} bound given in Section IV.B are much better
compared  to  the  spectral
variation  bound  in Section \ref{sec3}. We show  that  this  happens  due  to  the  presence
of \textit{difference term} in our bounds compared to spectral variation bound for
large state space.
\subsection{Bound based on Bapat and Lindqvist's inequality}
\label{bapat_ineq}
Recall the assumption ($\dagger$) on the feature matrix $\Phi$ from \cite{basu} which says that the feature matrix $\Phi$ has all non-negative entries 
and any two columns are orthogonal to each other. Here we strengthen the later part as follows: 
%(the rest of the paper follows the original assumption $(\dagger)$ made in \cite{basu}):

$(\star)$ Every row of the feature matrix $\Phi$ has exactly one positive entry i.e., for all $i$ there exists 
$1\leq k(i) \leq M$ such that $\phi^{j}(i) > 0$ if $j=k(i)$, otherwise $\phi^{j}(i)=0$.

Motivated by the discussion in Section \ref{sec3} and the fact that 
risk-sensitive cost is $\ln\lambda$ rather than $\lambda$ we need to find an 
upper bound on the error
$\ln \frac{\lambda}{\mu}$. Let $r(A)$ denote the Perron-Frobenius eigenvalue of the matrix $A =(a_{ij})_{s\times s}$. In the following we obtain three different bounds for the same quantity under the assumptions that 
a) $\lambda > \mu$, b) the matrix $P = [p(j|i)]$ has positive entries and impose conditions under
which one is better than the other.
Suppose $A$
admits left and right Perron eigenvectors $\textbf{x}, \textbf{y}$ respectively, with $\sum_i x_iy_i = 1$.
The three upper bounds of $\ln \lambda - \ln \mu$ are (\ref{first}) -(\ref{third}).

\begin{rmk}
\begin{enumerate}
 \item Note that in general it is hard to compare the bound given in  (\ref{spect}) 
 with the same in (\ref{first}) -(\ref{third}). We will only show that for the simple example of Section \ref{sec3} 
 the bounds given in (\ref{ineq1})-(\ref{ineq3}) are much better than the spectral variation bound when the state 
 space is large. Therefore $A$ and $B$ 
 will refer to matrices $A_1$ and $B_1$ respectively. It is easy to calculate
 $\|A\|, r(A), \mathbf{x}$ with this choice of $A$ and $B$. 
 %Consider the example shown in Section 3 when the state space $s$ is very large. 
 Note that the actual error is $\ln (1 + \frac{\epsilon}{q})$ with $p = q+ \epsilon$ where $\epsilon << q$.
If we use (\ref{spect}) then the error is bounded by $\ln (3 + \frac{\epsilon}{q})$.
However, if we use (\ref{ineq1}) the error is bounded by $\ln (1 + \frac{\epsilon}{q})$ i.e. the actual error. If we use (\ref{ineq3}) the error 
is bounded by $\ln\left(1+\left(1+\frac{\epsilon}{q}\right) \ln \left(1 + \frac{\epsilon}{q}\right)\right)$ which 
reduces to $\ln\left(1 + \frac{\epsilon}{q}\right)$ (using $x+ x^2 \sim x \mbox{~~if~~} x<<1$). If we 
use (\ref{ineq2}) the error is bounded by $\ln\left(1 + \frac{\epsilon}{q}\right)$ (using the Binomial approximation theorem). 

\item If $A$ is such that all its diagonal elements are $p$ and the off-diagonal elements are $q$ then
for large state space the actual error is zero. If we use (\ref{ineq1}) then the bound is also zero whereas 
the right hand side of (\ref{spect}) is $\ln 3$. 
 
\item If $A$ is such that the entry in the first row and first column is $p$ and the rest are all $q$, then also a similar 
thing happens except the fact that now the right hand side of (\ref{spect}) is $\ln \left(1 + 2e^{-\frac{4q}{3}}\right)$.

\item Note that here $a_{ij} > b_{ij} \forall i,j \in \{1,2,\dots, s\}$ in the above example. Our bound will be much more 
useful in cases where there are $i,j$ such that $b_{ij} > a_{ij}$. From the definition of $\delta_{ij}, \gamma_{ij}$ 
it is clear that for all $j$ there  exists $i$ such that $\delta_{ij} > \gamma_{ij}$. In such a case, for 
every $j$ there will be at least one non-positive term inside the summation 
over $i$ which will make the bound small. The bound given 
in (\ref{spect}) does not capture such cases for large $s$.
\end{enumerate}
%Also, from the proof of \cite[Theorem VIII.1.1]{bhatia} we know that the bound (\ref{spect}) is a strict inequality if the 
%matrix $P$ has all the entries positive whereas 
%from the main theorem in \cite{bapat} we see that there are cases when the equality condition holds in (\ref{ineq1}).
\end{rmk}

Here (\ref{second}) holds under (\ref{as1}) which follows from the fact that the following condition is necessary and 
sufficient for 
(\ref{ineq2}) to be true:
\begin{align}
\label{condn}
r(A) >  \sum_{i=1}^{s}x_i y_i (a_{ii}-b_{ii}) + \sum_{i \neq j} a_{ij}x_i y_j \ln\left(\frac{a_{ij}}{b_{ij}}\right)
\end{align}
and $\min_i \sum_j a_{ij} \leq r(A)$. Later in the proof of Lemma \ref{neq} we will see that,
in our setting, under $(\star)$, (\ref{condn}) gets
satisfied if the assumptions in Lemma \ref{neq} are true.

\begin{comment}
\small

\begin{empheq}[box=\fbox]{align}
&\min_i \sum_j e^{c(i,j)}p(j|i) > \sum_{i}x_i y_i\left(e^{c(i,i)}p(i|i)- \frac{\phi^{k(i)}(i)\sum_{l=1}^s \phi^{k(i)}(l)\pi_le^{c(l,i)}p(i|l)}{\sum_{m=1}^s{\phi^{k(i)}(m)}^2\pi_m}\right) + \sum_{i \neq j} x_i y_j e^{c(i,j)}p(j|i) \left(c(i,j) + \right.  \nonumber \\ 
& \left. \ln p(j|i)  +  \ln\left(\frac{\sum_{m=1}^{s} {\phi^{k(i)}(m)}^2 \pi_m}{\sum_{l=1}^s \phi^{k(i)}(l) \pi_l e^{c(l,j)}p(j|l)}\right)- \ln \phi^{k(i)}(i) \right) \label{as1}. 
\end{empheq}

\normalsize

\end{comment}
The bounds given in (\ref{ineq1})-(\ref{ineq3}) on $\frac{r(A)}{r(B)}$ immediately follow from the classic results of \cite[Theorem 1]{bapat} and \cite[Theorem 2]{henry}. In \cite[Theorem 3]{henry}, it 
is shown that under a condition on matrix entries, (\ref{first}) is better than (\ref{second}) whereas under another condition, the opposite holds. In the following we investigate how (\ref{third}) compares to the other two.

\begin{lem}
The bounds given in (\ref{third}) on $\ln\frac{\lambda}{\mu}$ are always better than (\ref{second}).
\end{lem}
\begin{proof}
Let $L = \sum_{i=1}^s x_i y_i (a_{ii} - b_{ii}) + \sum_{i,j=1, i \neq j}^s a_{ij} x_i y_j \ln\left(\frac{a_{ij}}{b_{ij}}\right)$. 
%Under the condition mentioned in  \cite[Theorem 3 (i)]{henry},
%\begin{align}
% r(A) \Pi_{i \neq j} \left(\frac{b_{ij}}{a_{ij}}\right)^{\frac{a_{ij}x_i y_j}{r(A)}} \geq r(A) - L. \nonumber
%\end{align}
%Therefore (\ref{first}) is better than (\ref{second}). 
Now, from \cite[Theorem 2]{henry}, we know that
$L \geq r(A) - r(B)$ which implies that 
\begin{align}
L(L - r(A) + r(B)) \geq 0, \nonumber 
\end{align}
which further implies that 
\begin{align}
\frac{r(B) + L}{r(B)} \leq \frac{r(A)}{r(A) - L}. \nonumber
\end{align}
%which implies that
%\begin{align}
%L(L - r(A) + r(B)) \leq 0. \nonumber 
%\end{align}
%which implies that $0\leq L \leq r(A) - r(B)$. 
%However, from \cite[Theorem 2]{henry} we know that
%$L \geq r(A) - r(B)$ 
This means that the bound given in (\ref{third}) is better than (\ref{second}).
\end{proof}

%\small

%\normalsize

\subsubsection{Some conditions}    % Each appendix must have a short title.
In this section we describe some sufficient conditions under which (\ref{first})-(\ref{second})
compare with each other. They will be referred in the next two lemmas.

\normalsize

\begin{lem}
\label{eq}
Assume that for all $i$, $b_{ii} = a_{ii}$ \cite[Theorem 3 (i)]{henry}. Then the bound given in (\ref{first}) is better than (\ref{second}). 
%and (\ref{thirdd}) is better than (\ref{second}).
%Therefore under the sufficient condition (\ref{as2}) and (\ref{as3}), (\ref{first}) is better than (\ref{second}) 
%and (\ref{second}) is better than (\ref{third}). 
\end{lem}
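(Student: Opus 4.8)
The plan is to work with the abstract forms (\ref{ineq1}) and (\ref{ineq2}), of which (\ref{first}) and (\ref{second}) are exactly the risk-sensitive instantiations (with $r(A)=\lambda$, $a_{ij}=e^{c(i,j)}p(j|i)$ and $b_{ij}=\delta_{ij}$). Taking logarithms in (\ref{ineq1}) shows that (\ref{first}) equals $\frac{1}{r(A)}\sum_{i,j=1}^s a_{ij}x_iy_j\ln\!\left(\frac{a_{ij}}{b_{ij}}\right)$, while (\ref{ineq2}) shows that (\ref{second}) equals $\ln r(A)-\ln\!\left(r(A)-\sum_i x_iy_i(a_{ii}-b_{ii})-\sum_{i\neq j}a_{ij}x_iy_j\ln\!\left(\frac{a_{ij}}{b_{ij}}\right)\right)$. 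Thus the comparison reduces to a scalar inequality between these two expressions.

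Next I would feed in the hypothesis $b_{ii}=a_{ii}$ for all $i$. Every diagonal log-term $\ln(a_{ii}/b_{ii})$ is then $0$, so the diagonal part drops out of the sum defining (\ref{first}); likewise $\sum_i x_iy_i(a_{ii}-b_{ii})=0$ disappears from (\ref{second}). Writing $L=\sum_{i\neq j}a_{ij}x_iy_j\ln\!\left(\frac{a_{ij}}{b_{ij}}\right)$ (which, under this hypothesis, coincides with the quantity $L$ used in the preceding lemma), the two bounds collapse to (\ref{first}) $=\frac{L}{r(A)}$ and (\ref{second}) $=-\ln\!\left(1-\frac{L}{r(A)}\right)$.

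Setting $t=L/r(A)$, the assertion that (\ref{first}) is better than (\ref{second}) is precisely the scalar inequality $t\le-\ln(1-t)$, equivalently $\ln(1-t)\le-t$. This is nothing but $\ln(1+x)\le x$ evaluated at $x=-t$, valid whenever $1-t>0$, i.e. $L<r(A)$; and $L<r(A)$ is exactly the validity condition (\ref{condn}) under which (\ref{second}) is a genuine bound (it holds here, under $(\star)$, by the assumptions of Lemma~\ref{neq}). One may additionally note $t\ge 0$, since \cite[Theorem 2]{henry} gives $L\ge r(A)-r(B)=\lambda-\mu>0$ by the standing hypothesis $\lambda>\mu$, so both bounds are positive and the comparison is nontrivial. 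Combining these, (\ref{first}) $\le$ (\ref{second}).

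The only delicate points are purely organizational: verifying that the diagonal terms cancel exactly and that $t$ lies in $[0,1)$ so that the elementary inequality applies. There is no genuine obstacle here --- once both bounds are reduced to $\frac{L}{r(A)}$ and $-\ln(1-\frac{L}{r(A)})$, the result is the one-line fact $\ln(1+x)\le x$.
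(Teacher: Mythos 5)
Your proof is correct, and it arrives at exactly the inequality the paper relies on, but by a self-contained route. The paper's entire proof is a citation of \cite[Theorem 3(i)]{henry}, which asserts $r(A)\prod_{i\neq j}\bigl(b_{ij}/a_{ij}\bigr)^{a_{ij}x_iy_j/r(A)} \geq r(A)-L$ when $b_{ii}=a_{ii}$ for all $i$; taking logarithms, that is precisely your scalar inequality $-t\geq \ln(1-t)$ with $t=L/r(A)$. You instead prove it in one line from $\ln(1+x)\leq x$, after correctly observing that the diagonal terms drop out of both expressions, so that (\ref{first}) collapses to $L/r(A)$ and (\ref{second}) to $-\ln\bigl(1-L/r(A)\bigr)$. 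Your reduction of (\ref{first}) and (\ref{second}) to the abstract forms (\ref{ineq1}) and (\ref{ineq2}) is accurate, and your explicit tracking of the validity condition $L<r(A)$ (which is (\ref{condn}) and is needed for (\ref{second}) to be defined at all) is more careful than the paper's. Two minor quibbles: you justify $L<r(A)$ by appeal to ``the assumptions of Lemma \ref{neq},'' but those assumptions (equal off-diagonal entries) belong to the opposite case and are incompatible with the hypothesis here; for this lemma the paper only exhibits an example in a subsequent remark where (\ref{condn}) holds, so it is cleaner simply to take $L<r(A)$ as a standing hypothesis accompanying (\ref{second}). Also, the observation $t\geq 0$ (via \cite[Theorem 2]{henry} and $\lambda>\mu$) is not actually needed, since $\ln(1-t)\leq -t$ holds for every $t<1$; it only serves to confirm that both bounds are nonnegative. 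What your approach buys is transparency: the comparison is seen to be nothing deeper than the concavity of the logarithm, whereas the paper leaves the key step opaque behind the citation.
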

%\begin{comment}
\begin{proof}
%Let $L = \sum_{i,j=1, i \neq j}^s a_{ij} x_i y_j \ln\left(\frac{a_{ij}}{b_{ij}}\right)$.
Under the condition mentioned in  \cite[Theorem 3 (i)]{henry},
\begin{align}
 r(A) \Pi_{i \neq j} \left(\frac{b_{ij}}{a_{ij}}\right)^{\frac{a_{ij}x_i y_j}{r(A)}} \geq r(A) - L. \nonumber
\end{align}
Therefore (\ref{first}) is better than (\ref{second}). 
%Now, from \cite[Theorem 2]{henry} we know that
%$L \geq r(A) - r(B)$ which implies that 
%\begin{align}
%L(L - r(A) + r(B)) \leq 0. \nonumber 
%\end{align}
%which again implies that 
%\begin{align}
%\frac{r(B) + L}{r(B)} \geq \frac{r(A)}{r(A) - L}. \nonumber
%\end{align}
%which implies that
%\begin{align}
%L(L - r(A) + r(B)) \leq 0. \nonumber 
%\end{align}
%which implies that $0\leq L \leq r(A) - r(B)$. 
%However, from \cite[Theorem 2]{henry} we know that
%$L \geq r(A) - r(B)$ 
%This means that  (\ref{third}) is better than (\ref{second}). 
%\ref{third}) is better than (\ref{second})
%\begin{lem}
%Assume the condition mentioned in \cite[Theorem 3 (i)]{henry}. Under the sufficient condition (\ref{as2}) and (\ref{as3}), (\ref{third}) is 
%worse than (\ref{first}).
%\end{lem}
\end{proof}
\begin{rmk}
One such example where the condition of  Lemma \ref{eq} gets satisfied is:
$A= (a_{ij})_{s \times  s}$ with $a_{ij} = q$ if $i=j$ and $a_{ij}=p$ otherwise, 
  and $B= (b_{ij})_{s \times  s}$ with $b_{ij}=q$ for all $1 \leq i,j \leq s$ with $p-q \leq q$. 
It is easy to check that 
(\ref{condn}) gets satisfied for this example.
\end{rmk}  
%\begin{rmk}
 %One such example for the condition on $L$ satisfied is:
%$A= (a_{ij})_{s \times  s}$ with $a_{ij} = q$ if $i=j$ and $a_{ij}=p$ otherwise, 
%  and $B= (b_{ij})_{s \times  s}$ with $b_{ij}=q$ for all $1 \leq i,j \leq s$ with $p-q \ll q$.
%\end{rmk}
\begin{rmk}
In our setting the condition mentioned in Lemma \ref{eq} gets satisfied if (\ref{as2}) is true. If the 
feature matrix is a single column matrix with all entries equal then  a sufficient condition for (\ref{as3}) 
is that for every $j$, $e^{c(i,j)}p(j|i)$ is the same for all $i$ (for example, the transition probabilities satisfy
$p(j|i) = e^{-c(i,j)}$ with the cost function $c(\cdot,\cdot)$ being non-negative).
\end{rmk}

%\end{comment}
\begin{lem}
\label{neq}
Assume that  for all $i \neq j$, $b_{ij} = a_{ij}$ and there is at least one $i$ 
such that $b_{ii} \neq a_{ii}$ \cite[Theorem 3 (ii)]{henry}. Then the bound given in (\ref{second}) is better than (\ref{first}).
\end{lem}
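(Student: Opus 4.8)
The plan is to follow the template of the proof of Lemma~\ref{eq}, comparing the two bounds through the lower bounds on $r(B)$ that they implicitly produce. Recall that (\ref{ineq1}) is equivalent to $r(B) \ge r(A)\,\Pi_{i,j=1}^s (b_{ij}/a_{ij})^{a_{ij}x_iy_j/r(A)}$, while (\ref{ineq2}) is equivalent to $r(B) \ge r(A) - L$, where $L = \sum_i x_iy_i(a_{ii}-b_{ii}) + \sum_{i\neq j} a_{ij}x_iy_j\ln(a_{ij}/b_{ij})$. Since $r(A)$ is fixed, a smaller upper bound on $r(A)/r(B)$ is exactly a larger lower bound on $r(B)$; hence the assertion that (\ref{second}) is better than (\ref{first}) is equivalent to $r(A)-L \ge r(A)\,\Pi_{i,j=1}^s (b_{ij}/a_{ij})^{a_{ij}x_iy_j/r(A)}$, the reverse of the inequality established in Lemma~\ref{eq}.

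Next I would invoke the hypothesis. Because $b_{ij}=a_{ij}$ for every $i\neq j$, all off-diagonal factors $(b_{ij}/a_{ij})^{\,\cdot}$ equal $1$ and every off-diagonal summand $a_{ij}x_iy_j\ln(a_{ij}/b_{ij})$ vanishes. Thus $L = \sum_i x_iy_i(a_{ii}-b_{ii})$ and the target inequality collapses to
\begin{align}
r(A) - \sum_{i=1}^s x_iy_i(a_{ii}-b_{ii}) \;\ge\; r(A)\,\Pi_{i=1}^s\left(\frac{b_{ii}}{a_{ii}}\right)^{\frac{a_{ii}x_iy_i}{r(A)}}. \nonumber
\end{align}
This is precisely the content of \cite[Theorem 3 (ii)]{henry}, so the lemma follows at once; equivalently, it can be checked by hand as follows.

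Dividing by $r(A)$ and writing $w_i := a_{ii}x_iy_i/r(A) \ge 0$ and $t_i := b_{ii}/a_{ii} > 0$, the inequality becomes $(1-W) + \sum_i w_i t_i \ge \Pi_{i=1}^s t_i^{\,w_i}$, where $W := \sum_i w_i$. The key preliminary fact is that $W \le 1$: since $\mathbf{x},\mathbf{y}>0$ and $A \ge 0$, the relations $\mathbf{x}^T A = r(A)\mathbf{x}^T$, $A\mathbf{y}=r(A)\mathbf{y}$ and the normalization $\sum_i x_iy_i = 1$ give $\sum_{i,j=1}^s a_{ij}x_iy_j = r(A)$, whence $\sum_i a_{ii}x_iy_i \le r(A)$ and $W\le 1$. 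Introducing the augmenting weight $w_0 := 1-W \ge 0$ with value $t_0 := 1$, the family $\{w_i\}_{i=0}^s$ sums to $1$, and weighted AM--GM yields $\sum_{i=0}^s w_i t_i \ge \Pi_{i=0}^s t_i^{\,w_i}$, which is exactly the required inequality.

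The main obstacle is bookkeeping rather than depth. One must track the orientation of the word \emph{better}: a tighter upper bound on $r(A)/r(B)$ is a \emph{larger} lower bound on $r(B)$, so the comparison here is the exact reverse of the one in Lemma~\ref{eq}. One must also justify the augmenting weight by verifying $W \le 1$, and this is precisely where the positivity of the Perron eigenvectors and the normalization $\sum_i x_iy_i = 1$ (both guaranteed by irreducibility) enter. Finally, since the hypothesis forces $t_i \neq 1$ for at least one $i$, the equality case of AM--GM fails, so the inequality is in fact strict, confirming that (\ref{second}) is strictly better than (\ref{first}).
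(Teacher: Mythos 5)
Your proof is correct, and its core coincides with the paper's: both reduce the claim to the single inequality $r(A)\,\Pi_{i=1}^s \left(\frac{b_{ii}}{a_{ii}}\right)^{a_{ii}x_iy_i/r(A)} \leq r(A)-L$ (the off-diagonal factors and summands dropping out by hypothesis) and obtain it from \cite[Theorem 3 (ii)]{henry}. Where you go beyond the paper is that you also \emph{prove} this inequality rather than only citing it: you normalize by $r(A)$, use $\sum_{i,j}a_{ij}x_iy_j=r(A)$ together with nonnegativity to check that the diagonal weights $w_i=a_{ii}x_iy_i/r(A)$ sum to at most $1$, and then close the argument with an augmenting weight $w_0=1-W$, $t_0=1$ and weighted AM--GM. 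That is a clean, self-contained substitute for the black-box citation, and it has the side benefit of yielding strictness (some $t_i\neq 1$ with $w_i>0$, which holds since this section assumes $A$ has positive entries, and $W<1$ for the same reason). One bookkeeping point the paper makes explicit and you leave implicit: for (\ref{second}) to be a legitimate bound one must verify (\ref{condn}), i.e.\ $r(A)-L>0$; this does follow from your chain, since the geometric mean $\Pi_i t_i^{w_i}$ is strictly positive when all $b_{ii}>0$, but it deserves a sentence. Your orientation of ``better'' (a smaller upper bound on $r(A)/r(B)$ equals a larger lower bound on $r(B)$, the reverse of the comparison in Lemma~\ref{eq}) is handled correctly.
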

%\begin{proof}
%Both the prove follows trivially from the assumptions made. 
%\end{proof}

%\begin{comment}
\begin{proof}
%Let $L' = \sum_{i=1}^s x_i y_i (a_{ii}-b_{ii})$.
Under the condition mentioned in  \cite[Theorem 3 (ii)]{henry},
\begin{align}
 r(A) \Pi_{i=1}^s \left(\frac{b_{ii}}{a_{ii}}\right)^{\frac{a_{ii}x_i y_i}{r(A)}} \leq r(A) - L. \nonumber
\end{align}
Therefore (\ref{condn}) gets satisfied trivially if for all $i$, $b_{ii} \neq 0$ (which is true in our setting under 
$(\star)$ and condition $b)$ above). Therefore (\ref{second}) is better than (\ref{first}). 
%Now, (\ref{third}) is 
%better than (\ref{second}) if 
%\begin{align}
%\frac{r(B) + L'}{r(B)} \leq \frac{r(A)}{r(A) - L'}. \nonumber 
%\end{align}
%which implies that
%\begin{align}
%L'(L' - r(A) + r(B)) \geq 0. \nonumber 
%\end{align}
%which implies that $L' \leq 0$ or $L' \geq r(A) -r(B)$.
\end{proof}
%\begin{rmk}
%One such example where the condition mentioned in Lemma \ref{neq} gets staisfied is:
%$A= (a_{ij})_{s \times s}$ with $a_{ij} = p$ for all $i=j$ and $a_{ij} = q$ otherwise, 
%  and $B= (b_{ij})_{s\times s}$ with $b_{ij} =q$ if $i \neq j$ and $b_{ij} = x_i$ 
%  otherwise with $x_i>0, 1 \leq i \leq s$. It is easy to check that 
%  in that case (\ref{condn}) gets satisfied easily.
%\end{rmk}
\begin{rmk}
In our setting the condition mentioned in Lemma \ref{neq} gets satisfied if (\ref{as3}) is true and there exists at least one 
$i$ for which either (\ref{as5}) or (\ref{as6}) are true (assuming that 
feature matrix is a single column matrix with all entries equal). If the 
feature matrix is a single column matrix with all entries equal then  a necessary and sufficient condition for (\ref{as5}) 
is that for every $j$, $e^{c(i,j)}p(j|i)$ is the same for all $i \neq j$. 
%One sufficient condition 
%for this is that for every $j$, $e^{c(i,j)}p(j|i)$ is same for all $i$.
\end{rmk}

\subsection{\textbf{New} bound for non-negative matrices involving operator norm}
\label{op_norm}
Like Section \ref{bapat_ineq} here also we assume that $\lambda > \mu$. Note that the bounds derived in Section \ref{bapat_ineq} assume that the entries of $A$ are all positive. In this section we assume that the entries are only non-negative, however the matrix is irreducible.
If $A$ is an $s \times s$ normal matrix and $B$ is an arbitrary matrix then it is well-known \cite[Theorem VI.3.3]{bhatia} that 
\begin{align}
\label{normal}
|\lambda - \mu| \leq \|A-B\|.   
\end{align}
Now, for non-normal matrices it is not true. For example take the $s\times s$ matrix $A$ such that $a_{ij}= 1$ if $j=i+1$ and zero otherwise and $B$ is such that $b_{ij}=a_{ij}$ for all $i,j$ except that $b_{s1}=\epsilon$. Then the L.H.S of (\ref{normal}) becomes $\epsilon^{1/n}$ whereas the R.H.S becomes $\epsilon$. Therefore it is interesting to see whether one can give any bound involving $\|A-B\|$. In the following we provide such a bound. Assume that 
$\|v\|= \sum_{i=1}^s |v_i|$.
Let $\alpha(A) = \max_i (x_A)^{-1}_i$ where $x_A$ is the  Perron-Frobenius eigenvector of $A$
which has positive components if $A$ is irreducible. Let $x_B$ be the  Perron-Frobenius eigenvector of matrix $B$. Then we have the following result:
%\begin{thm}
%Under the assumptions
%\begin{enumerate}[label=
%\item $\min_{i} \sum_j a_{ij} \min_i \sum_j b_{ij} \geq \Pi_{i=1}^s a_{ii},$ 
%\end{enumerate}
\begin{thm}
\label{mth}
\begin{align}
\label{invert}
 \ln\left(\frac{\lambda}{\mu}\right) \leq \ln\left(1+ \frac{\alpha(A^T) \|A- B\|)}{\mu}\right).
\end{align}
\end{thm}
\begin{proof}
%Let $A = C \circ P$ and $B = \Pi \mathcal{M}$.
\begin{align}
&\left(\lambda - \mu\right)\langle x_{A^T}, x_{B} \rangle \nonumber \\
&=\langle A^T x_{A^T}, x_{B}\rangle - \langle x_{A^T},B x_B\rangle \nonumber \\
&=\langle x_{A^T}, Ax_{B}\rangle - \langle x_{A^T},B x_B\rangle \nonumber \\
&= \langle  x_{A^T}, (A-B)x_B\rangle. \nonumber 
\end{align}
Moreover, 
\begin{align}
\langle x_{A^T}, x_{B} \rangle \geq {\alpha(A^T)}^{-1}\|x_{B}\| = {\alpha(A^T)}^{-1}. \nonumber
\end{align}
Then the proof follows from the observation that 
\begin{align}
|\langle  x_{A^T}, (A-B)x_B\rangle | \leq \|A-B\|. \nonumber
\end{align}
Here all the eigenvectors are normalized so that their norm is 1.

\end{proof}
\begin{rmk}
From the proof of Theorem \ref{mth} the following can be observed easily:
\begin{align}
\lambda - \mu &= \frac{\langle(A-B)x_A, x_{B^T}\rangle}{\langle x_A, x_{B^T}\rangle} =  \frac{\langle (A-B)x_{A^T}, x_{B}\rangle}{\langle x_{A^T}, x_{B}\rangle} \nonumber \\
&= \frac{\langle (A^T-B)x_A, x_{B}\rangle}{\langle x_A, x_{B}\rangle} = \frac{\langle(A-B^T)x_A, x_{B}\rangle}{\langle x_A, x_{B}\rangle} \nonumber
\end{align}
\end{rmk}
\begin{cor}
 If $A$ and $B$ are irreducible matrices with the minimum non-zero entry of $A$ being less than 1, then
 $|\lambda - \mu| \leq \max_i \frac{s (\max_i \sum_j a_{ij})^{s-1}\sum_j |a_{ij} - b_{ij}|}{(A_{\min})^{s-1}}$.
\end{cor}
\begin{proof}
From the above proof one can easily see that 
\begin{align}
\lambda - \mu = \frac{\langle  (A-B)x_A, x_{B^T}\rangle}{\langle x_A,  x_{B^T}\rangle}.
\end{align} Now, if we assume that $A$ and $B$ are both irreducible matrices then using the simple fact that if for all $i$, $q_i > 0$, then         $\frac{\sum_i p_i}{\sum_i q_i} \leq \max_i \frac{p_i}{q_i}$, we see that 
\begin{align}
\lambda - \mu &\leq \max_i \frac{((A-B)x_A)_i}{(x_A)_i} \\ \nonumber
              &\leq \max_i \frac{\sum_j (a_{ij} - b_{ij}) (x_A)_j}{(x_A)_i}
\end{align}
Now, $|\lambda - \mu| \leq \max_i \frac{s (\max_i \sum_j a_{ij})^{s-1}\sum_j |a_{ij} - b_{ij}|}{(A_{\min})^{s-1}}$. Note that in R.H.S, everything is in terms of matrix entries.
\end{proof}

%\begin{rmk}
%If $A=C\circ P$, 
%\begin{enumerate}
% \item Using Oppenheim's Inequality \cite[p~144]{bapat_b} \textbf{(A1)} is satisfied if $P$ is positive definite.  
% \item \textbf{(A2)} is satisfied if $C$ and $P$ are both positive semidefinite.
% \item Under $(\star)$, \textbf{(A3)} is trivially satisfied if at least one of the diagonal elements of the transition probability matrix is zero. 
%\end{enumerate} 
%In the case when $C\circ P = \Pi \mathcal{M}$, our bound gives the quantity $\ln\left(\frac{det(C \circ P)}{{\mu}^2}\right)$ which is 
%less than or equal to $\ln \left(\frac{\lambda}{\mu}\right)$ due to \textbf{(A3)}. Hence our bound is a good bound.
%\end{rmk}
\begin{cor}
For matrices $A$ whose column sums are equal, 
\begin{align}
%\label{}
 \ln\left(\frac{\lambda}{\mu}\right) \leq \ln\left(1+ \frac{s \|A- B\|)}{\mu}\right).\nonumber
\end{align}
\end{cor}
\begin{proof}
Follows from the fact that $\alpha(A^T) =s$.
\end{proof}

\begin{cor}
Equality condition is achieved in (\ref{invert}) iff $(A-B) x_B = \|A-B\| x_{A^T}$ and $\langle x_{A^T}, x_B \rangle = \alpha(A^T)^{-1}$.
\end{cor}
\begin{proof}
Follows trivially from the proof of (\ref{invert}).
\end{proof}
\begin{rmk}
Let us take $A = (a_{ij})_{s\times s}$ with $a_{ij} = p$ if $i=j$ and $a_{ij} =q$ otherwise and $b_{ij} = q$ for all $i,j$ with $p >q$.
Clearly for large $s$ the R.H.S of (\ref{invert}) becomes $\ln\left(\frac{p}{q}\right)$ whereas the R.H.S of (\ref{spect}) becomes $\ln 3$. Therefore if $p<3q$ (\ref{invert}) is a better bound than (\ref{spect}).   
%Clearly \textbf{(A1)-(A2)} gets satisfied. 
%\textbf{(A3)} gets satisfied if $q \ll 1$ and $s \geq \frac{1}{q}$ 
%Then $det(A) = (p + (s-1)q)(p-q)^{s-1}$. $\alpha(A) = s$. 
\end{rmk}
\begin{rmk}
 As an illustrative example, consider the $s\times s$ non-symmetric matrix $A$ such that $a_{ij}= 1$ if $j=i+1$ and zero otherwise except that $a_{s1}=\epsilon_1$ and $B$ is such that $b_{ij}=a_{ij}$ for all $i,j$ except that $b_{s1}=\epsilon_2$. Also, assume that $\epsilon_1, \epsilon_2 \gg 1$ with $\epsilon_2 - \epsilon_1 \ll 1$. Clearly  for large $s$, $|\lambda - \mu|$ can be upper bounded by $s \left(\epsilon_2 - \epsilon_1 \right)$ using (\ref{invert}) whereas (\ref{spect}) becomes $\epsilon_1 + \epsilon_2$. 
\end{rmk}
\begin{rmk}
 Now assume that $\epsilon_1 < 1, \epsilon_2 >1$. Note that the smallest component $x_i$ in the Perron eigenvector is lower bounded by                      $\frac{(A_{\min})^{s-1}}{s (\max_i \sum_j a_{ij})^{s-1}}$ where $A_{\min}$ is the minimum non-zero element of $A$ and $\max_i \sum_j a_{ij} \geq 1$. The reason is that $x_i = \frac{\sum_j A_{ij}^{n} x_j}{\lambda^{n}}$ which is lower bounded by $\frac{A_{ij}^{n} x_j}{\lambda^{n}}$ where $\lambda$ is the largest eigenvalue of $A$ and $x_j$ is the largest component in the Perron eigenvector. Here $A^n_{ij} > 0$ where $n \leq s-1$. Then the R.H.S. of (\ref{invert}) is upper bounded by $\frac{s(\epsilon_2-\epsilon_1)}{\epsilon_1^{s-1}}$. 
\end{rmk}
The above two examples show the value of keeping \textit{difference terms} in the bound in case when the state space is large.
\begin{rmk}
\textbf{Improvement of the bound:} In case the minimum non-zero element in the $A$ matrix is less than 1 (this can be achieved if for some $(i,j)$, $c(i,j) <0$) and the largest eigenvalue of $B$ is greater than 0, using AM-GM inequality and lower bounding the $i$-th lowest component in the Perron eigenvector (using the fact that the highest component is lower bounded by $1/s$) one can improve the bound. 
%(see Notebook labelled Risk for details).
We know that
\begin{align}
\langle x_A, x_B \rangle \geq \Pi_i (x_A)_i^{(x_B)_i}.
\end{align}
Now, $(x_B)_i$ can be upper bounded by $\frac{1}{\mu} \max B$ where $\max B$ is the maximum entry of $B$.
Now, for $1 \leq i \leq s-1$, the $i$-th lowest element in $x_A$ can be lower bounded by $\frac{(A_{\min})^{s-1}}{s(\max_i \sum_j a_{ij})^{s-1}}$ and the largest element in $x_A$ can be lower bounded by $\frac{1}{s}$.
Therefore,
\begin{align}
\langle x_A, x_B \rangle \geq \left(\frac{(A_{\min})^{(s-1)^2}}{s^s(\max_i \sum_j a_{ij})^{(s-1)^2}}\right)^{\frac{1}{\mu} \max B}.
\end{align}
\end{rmk}

\begin{rmk}
Suppose $A$ is the same as before while $b_{ij} = q'$ for all $i,j$
with $q'>q$.  It is easy to see that if the transition probability matrix is doubly stochastic then this kind of a situation arises if we assume that all the non-zero entries in the feature matrix are 1. Then (\ref{invert}) becomes an equality if $s = \frac{p-q}{q'-q}$.
\end{rmk}

\begin{rmk}
Note that $B$ need not be irreducible under the assumption $(\dagger)$ in \cite{basu}. Therefore, 
$x_B$ need not have all the components positive. 
\end{rmk}
\begin{rmk}
Similar bounds as above can be derived in the same way if $\lambda < \mu$. 
\end{rmk}
\section{Application of the results in the setting of the  example of investor portfolio in \cite{basu}}
\label{sec5}
In this section we revisit the example of investor portfolio in \cite{basu}. Let an investor’s portfolio consist only of $d$ stocks and one money market account. The
stock prices follow a $d$-dimensional ``geometric Brownian motion'' $s(t):= \left(s^{(1)}(t), \dots,s^{(d)}(t)\right)^T$ given by the
following stochastic differential equation:
\begin{align}
(\mbox{diag}(s(t)))^{-1}ds(t) = \tilde{b} dt + \tilde{\sigma} d\tilde{W}(t) \nonumber    
\end{align}
and the money market account $s^{(0)}(t)$ follows the equation:
\begin{align}
ds^{(0)}(t) = \bar{r}s^{(0)}(t)dt  \nonumber  
\end{align}
where $\tilde{W}(\cdot):=\left[\tilde{W}_1(\cdot), \dots, \tilde{W}_d(\cdot)\right]^T$ is a $d$-dimensional standard Brownian motion and the interest rate $\bar{r} \geq 0$, the drift vector $\tilde{b}=\left[\tilde{b}_1,\dots, \tilde{b}_d\right]^T \in \mathbb{R}^d$ , and the diffusion matrix
\begin{align}
\begin{pmatrix}
\tilde{\sigma}_{11} & \dots & \tilde{\sigma}_{1d}\\
\vdots & \vdots & \vdots \\
\tilde{\sigma}_{d1} & \dots & \tilde{\sigma}_{dd}
\end{pmatrix}
\end{align}
are assumed to be known. 
The investor is trying to optimize his ``returns'' under a ``safe policy'' of always keeping some money in his
money market account and distributing the rest over all assets in an optimal way. Specifically, he first ensures
that a fixed small amount $\delta > 0$ is taken from the total wealth (denoted by $V(\cdot)$) and put into the money
market account and then manipulates the fraction of his remaining total wealth in any particular asset $i$, given by $u^{(i)}(\cdot) = \frac{N_{i}(\cdot)s^{(i)}(\cdot)}{V(\cdot) - \delta}$. Here $N_{i}(t)$ = number of units of asset $i$ held at time $t$. Thus we can assume
that the investor’s control process $u(\cdot)$ takes values in $U=\{(u^{(0)}, u^{(1)}, \dots,u^{(d)})\in [0,1]^{d+1}: \sum_{i=0}^d u^{(i)}=1\}$. Under these assumptions, the ``remaining'' wealth process of the investor, defined as, $\hat{V}(t)=\sum_{i=0}^d N_i(t)s^{(i)}(t)=V(t) - \delta$, follows the s.d.e
\begin{align}
\frac{d\hat{V}(t)}{\hat{V}(t)} = \langle u(t), (\bar{r}, \tilde{b})\rangle dt + \langle \tilde{\sigma}'u^{[1:d]}(t), d\tilde{W}(t)\rangle \nonumber    
\end{align}
Thus the total wealth process $V(\cdot)$ follows the s.d.e.:
\begin{align}
\frac{dV(t)}{V(t)-\delta} = \langle u(t), (\bar{r},\tilde{b})\rangle dt + \langle \tilde{\sigma}'u^{[1:d]}(t),d\tilde{W}(t)\rangle    \nonumber
\end{align}
By a standard argument using the martingale representation theorem, we write the above equation for $V(t)$ as follows:
\begin{align}
\label{mrt}
dV(t) = b(V(t), u(t))dt + \sigma(V(t),u(t))dW(t)  
\end{align}
where $b(\cdot):\mathcal{R} \times U \to \mathcal{R}, \sigma(\cdot):\mathcal{R} \times U \to \mathcal{R}$
are given by
\begin{align}
b(V(t),u(t)) = (V(t)-\delta) \langle u(t), (\bar{r},\tilde{b})\rangle, \nonumber \\
\sigma(V(t),u(t)) = (V(t)-\delta) \parallel \tilde{\sigma}'u^{[1:d]}(t) \parallel \nonumber
\end{align}
and $W(\cdot)$ is a standard one-dimensional Brownian motion on a possibly enlarged probability space. Note that
starting with an initial wealth $V(0) = x > \delta$, it follows that $V(\cdot) \geq \delta$ % 
a.s. for any control process $u(\cdot)$. We truncate and discretize the state space as $S_h =\{ \delta, \delta + h, \delta + 2h,\dots, \delta + Nh\}$. Given a fixed stationary
control process $u(\cdot)$ as per the above formulation and the corresponding portfolio process (\ref{mrt}) with initial value
$V_0 = x > 0$, we can exactly mimic the following arguments (for details see Examples 3 and 4 of \cite[p~ 95-98]{kush_dup})
to get an approximating Markov chain $\{V_{n}^{h}\}_{n=0,1,\dots}$ on the finite state space $S_{h}$:
Let $\tau = \min\{t:V(t)\notin (0,B), B >0\}$ and define the cost function $W(v,u) = E_v^u\left[\int_{0}^{\tau} k(v(s),u(s))ds + g(v(\tau))\right], W(v,u)=g(v),$ for $v=0,B$ where $k(v,u)$ is a bounded and continuous function. Formally applying Ito's formula to the
function $W(v, u)$ yields the equation
\begin{align}
L^{u}W(v,u) + k(v,u)=0, v \in (0,B) \nonumber    
\end{align}
with boundary conditions $W(0,u)=g(0),W(B,u)=g(B)$ where $L^u$ is
the differential operator of $V(\cdot)$ when the control is fixed at $u$. In particular,
\begin{align}
W_v(v,u)b(v,u) + W_{vv}(v,u)\sigma^2(v,u)/2 + k(v,u)=0,v\in (0,B) \nonumber   
\end{align}
Using the finite difference approximations
\begin{align}
f_v(v,u) \to \frac{f(v+h,u) -f(v,u)}{h}\mbox{~if~}b(v,u) \geq 0, \nonumber \\
f_v(v,u) \to \frac{f(v,u) -f(v-h,u)}{h}\mbox{~if~}b(v,u) < 0, \nonumber \\
f_{vv}(v,u) \to \frac{f(v+h,u) + f(v-h,u) -2f(v,u)}{h^2}\nonumber
\end{align}
we get the following approximating equation:
\begin{align}
W^h(v,u)=p^h(v,v+h|u)W^h(v+h,u) + \nonumber \\ 
p^h(v,v-h|u)W^h(v-h,u) +  k(v,u)\Delta^h(v,u) \nonumber
\end{align}
where we denote by $W^h(v,u)$ the finite difference approximation and the transition probabilities of the corresponding Markov chain can be written as 
\begin{align}
p^h(v,v+h|u) &= \frac{\frac{\sigma^2(v, u)}{2} +h b^+(v,u)}{\sigma^2(v) + h |b(v,u)|}, \nonumber\\
p^h(v,v-h|u) &= \frac{\frac{\sigma^2(v, u)}{2} +h b^-(v,u)}{\sigma^2(v) + h |b(v,u)|}, \nonumber\\
p^h(v,v'|u) &= 0 ~\forall v' \neq v \pm h \nonumber
\end{align}
and $\Delta^h(v,u)=\frac{h^2}{\sigma^2(v,u) + h|b(v,u)|}$.
With this Markov chain and the cost function $c(i,j)=\ln \frac{j}{i}$, one can easily calculate an estimate of the following  risk-sensitized expected exponential portfolio growth rate, also called the volatility-adjusted geometric mean return using algorithm (\ref{lspe}):
\begin{align}
J_{\theta}^h(v):=\liminf_{n \to \infty}\frac{-2}{\theta}\frac{1}{n}\ln E\left[e^{-\theta/2}\ln V_n^h|V_0^h=v\right] \nonumber    
\end{align}
where $\theta \in (0, \infty)$ can be interpreted as the risk-sensitivity parameter or risk-aversion parameter, because the
bigger its value the bigger the penalty associated with the portfolio’s risk.

Then one can easily construct the entries of $A$ and $B$ matrix (see (\ref{gama})) and use them in the results 
of Section \ref{sec4} to get the error bound for this specific example.
\section{Other improvements over \cite{basu}}
\label{sec6}
\subsection{Theoretical Conditions for zero error}

\subsubsection{Condition 1}
\begin{lem}
Let $\mathbf{x}$ be the left Perron eigen vector of the non-negative matrix $C \circ P$ i.e., $\mathbf{x}^T C \circ P = \lambda \mathbf{x}^T$.  
If $\Phi$ is an $s \times 1$ matrix  and  $\phi^i = y_i$ where $y_i = \frac{x_i}{\pi_i}$,
then $\mu = \lambda$, i.e., there will be no error when function approximation is deployed. 
\end{lem}
\begin{proof}
We know that $\delta_{ij} = \frac{\phi^{k(i)}(i) \sum_{l =1}^{s} \phi ^{k(i)}(l) \pi_l \gamma_{lj}}{\sum_{m=1}^{s}{\phi^{k(i)}(m)}^2 \pi_m}$, 
where $\gamma_{ij} = e^{c(i,j)} p(j|i)$.

We claim that with the choice of feature matrix as stated in the lemma, $\lambda$ is the eigenvalue of $B$ with eigenvector being $\mathbf{y}=(y_i)_{i \in \{1,2,\dots s\}}$.

\begin{align}
(\Pi \mathcal{M} y)_i = \sum_{k=1}^s \delta_{ik} y_k &= \sum_{k=1}^s \frac{y_i \sum_{l=1}^s x_l \gamma_{lk}}{\sum_{m=1}^s \frac{{x_m}^2}{\pi_m}} y_k \nonumber \\
&=\lambda y_i \frac{\sum_{k=1}^s \frac{{x_k}^2}{\pi_k}}{\sum_{m=1}^s \frac{{x_m}^2}{\pi_m}} \nonumber  
\end{align}

\end{proof}

\subsubsection{Condition 2}
%In order to apply the above in our work we assume that all the entries of the matrix $P = p(j|i)$ are positive.
%Recall the assumption ($\dagger$) on the feature matrix $\Phi$ from \cite{basu} which says that the feature matrix $\Phi$ has all non-negative entries 
%and any two columns are orthogonal to each other. In this paper we strengthen the later part as follows: 
%(the rest of the paper follows the original assumption $(\dagger)$ made in \cite{basu}):

%$(\star)$ Every row of the feature matrix $\Phi$ has exactly one positive entry i.e., for all $i$ there exists 
%$1\leq k(i) \leq M$ such that $\phi^{j}(i) > 0$ if $j=k(i)$, otherwise $\phi^{j}(i)=0$.
Assume the ($\star$) from Section \ref{bapat_ineq}.

%Clearly, $C \circ P = \Pi \mathcal{M}$ if $j$-th column of the matrix $C \circ P$ is an 
%eigenvector of the matrix $\alpha^j_{il} =  \frac{\phi^{k(i)}(i) \phi ^{k(i)}(l) \pi_l}{\sum_{m=1}^{s}{\phi^{k(i)}(m)}^2 \pi_m}$ 
%with eigenvalue 1 where $1 \leq i \leq s, 1 \leq l \leq s$.

%\begin{align}
%\text{det}(B_j - I) = 0, \nonumber
%\end{align}
%where $B_j = (\alpha^j)_{il}$ with $1 \leq i \leq s, 1 \leq l \leq s$.

%For the special case where the transition probability matrix is doubly stochastic this will require that 
%\begin{align}
%\frac{\phi^{k(i)}(i) \sum_{l =1}^{s} \phi ^{k(i)}(l) a_{lj}}{\sum_{m=1}^{s}{\phi^{k(i)}(m)}^2} =  a_{ij}. \nonumber 
%\end{align}
%If we additionally assume that $C \circ P$ is
%invertible, this means that the matrix with the $(i,j)$-th entry as 
%$\frac{\phi^{k(i)}(i)\phi^{k(i)}(j)}{\sum_{m=1}^{s}{\phi^{k(i)}(m)}^2}$ is the identity 
%matrix which under the assumption ($\dagger$) made on the feature matrix in \cite{basu} 
%gives the following feature selection criterion for all $i$: 
%\begin{align}
%\phi^{k(i)}(i)^2=\sum_{m=1}^{s}{\phi^{k(i)}(m)}^2. \nonumber
%\end{align}
%One can also give the following feature selection criterion when the transition probability matrix is doubly 
%stochastic:

%Note that if $\Phi$ is a diagonal matrix then $C \circ P = \Pi \mathcal{M}$ implies that the transition kernel is doubly stochastic. 

%Now the question is can we find a much better verifiable condition than (5) under which 
%the error is zero even when the transition kernel is not doubly stochastic. The answer is the following:

From \cite[Theorem 1]{bapat} it is easy to see that (this theorem is applicable due to Lemma 5.1 (ii) of \cite{basu} and $(\star)$)
the error can be zero even if 
$C\circ P \neq \Pi \mathcal{M}$, namely under the following conditions:
\begin{enumerate}
 \item there exists positive $\lambda_0, \beta_i, i=1,2, \dots, s$  such that 
 \begin{align}
  \delta_{ij} = \frac{\lambda_0 \gamma_{ij} \beta_i}{\beta_j}, i,j =1,2,\dots,s. \nonumber 
 \end{align}
\item $\Pi_{i,j=1}^s {\left(\delta_{ij}\right)}^{\gamma_{ij}x_i y_j} = \Pi_{i,j=1}^s {\left(\gamma_{ij}\right)}^{\gamma_{ij}x_i y_j}.$
\end{enumerate}

%As an application of the above if we chose $\Phi$ to be a diagonal matrix then (1) is trivially satisfied. (2) will be 
%satisfied under the condition that
%\begin{align}
%\sum_{i,j=1}^s a_{ij} x_i y_j \ln \left(\frac{}{}\right) =0. 
%\end{align}
\begin{rmk}
Note that if the matrix $\Phi$ has a row $i$ with all $0$s, then $\delta_{ij} =0$ for all $j=1,2,\dots,s$ whereas $\gamma_{ij} >0$ for 
at least one $j \in \{1,2,\dots,s\}$ which violates the conditions for zero error stated above.
\end{rmk}

\subsection{Convergence of temporal difference learning algorithm in risk-sensitive setting}
%Note that if the transition kernel is a doubly stochastic matrix then 
%it is very hard to find easily verifiable condition on the feature matrix such that $C \circ P = \Pi \mathcal{M}$. The reason is that 
%this requires one to find a feature matrix $\Phi$ which under $(\star)$ satisfies $\Phi(\Phi^T\Phi)^{-1}\Phi^T = I$. 
%This will not be true under $(\star)$ as it requires  $k(i) \neq k(j)$ to hold if $i \neq j$. 
%
The algorithm considered in (\ref{lspe}) involves matrix multiplication and matrix inverse. This problem can be simplified using
the temporal difference learning algorithm with function approximation for this setting as under:
\small
\begin{equation}
\label{rtd}
\theta_{n+1}=\theta_n + a(n)\left[\frac{e^{c(X_n, X_{n+1})} \phi^T(X_{n+1})\theta_n}{\phi^T(i_0)\theta_n} -\phi^T(X_n) \theta_n\right]\phi(X_n).
\end{equation}
\normalsize
The following theorem shows the convergence of recursion (\ref{rtd}).
\begin{comment}
\begin{thm}
Let $D$ be a diagonal matrix with first $d$ diagonal entries $\pi_i, 1 \leq i \leq d$ and the rest are zero. If $\Phi\Phi^T = D^{-1}$ then $\phi^T(i_0)\theta_n \to \lambda$.
\end{thm}
%\begin{comment}
\begin{proof}
Note that the algorithm tracks  the o.d.e
\begin{align}
\dot{\theta}(t) = \left(\frac{A'}{\phi^T(i_0)\theta(t)} - B'\right)\theta(t), \nonumber
\end{align}
where $A' = \Phi^TDC\circ P \Phi$ and $B' = \Phi^TD\Phi$.

This follows because it is easy to see that the algorithm tracks by the o.d.e
\begin{align}
\dot{\theta}(t) = h(\theta(t)), \nonumber
\end{align}
where $h(\theta)= \sum_i \sum_j \pi(i)p(j|i)\left[\frac{\phi^T(j)\theta}{\phi^T(i_0)\theta} - \phi^T(i)\theta\right]\phi(i)$.
Now, the $k$-th entry of $A'\theta$ is 
\begin{align}
\langle \left(\sum_{i=1}^s \phi^k(i)  \sum_{j=1}^s e^{c(i,j)} p(j|i) \phi(j)\right), \theta \rangle.
\end{align}
Similarly, the $k$-th entry of $B'\theta$ can be shown to be the $k$-th entry of $\sum_i \sum_j \pi(i)p(j|i)\left[\phi^T(i)\theta\right]\phi(i)$.

Now, the claim follows directly from \cite[Theorem 5.3]{basu} (the synchronous implementation).
\end{proof}
\end{comment}
\begin{thm}
Let $D$ be a diagonal matrix with first $d$ diagonal entries $\pi_i, 1 \leq i \leq d$ and the rest being zero. If $\Phi\Phi^T = D^{-1}$ and $\sup_n \|\theta_n\| < \infty \mbox{~a.s.}$ 
then $\phi^T(i_0)\theta_n \to \lambda_0$ as $n \to \infty$ where $\lambda_0$ is the largest eigenvalue of the leading $d \times d$ submatrix of $C \circ P$.
\end{thm}
%\begin{comment}
\begin{proof}
First, we analyze the $\epsilon=0$ case.
Note that the algorithm tracks  the o.d.e
\begin{align}
\dot{\theta}(t) = \left(\frac{A'}{\phi^T(i_0)\theta(t)} - B'\right)\theta(t), \nonumber
\end{align}
where $A' = \Phi^TDC\circ P \Phi$ and $B' = \Phi^TD\Phi$.

This follows because it is easy to see that the algorithm tracks the o.d.e
\begin{align}
\dot{\theta}(t) = h(\theta(t)), \nonumber
\end{align}
where $h(\theta)= \sum_i \sum_j \pi(i)p(j|i)\left[\frac{\phi^T(j)\theta}{\phi^T(i_0)\theta} - \phi^T(i)\theta\right]\phi(i)$.

The above statement follows from the 
convergence theorem for stochastic recursive inclusion with Markov noise \cite{yaji_m} as 
the vector field in (\ref{rtd}) is merely continuous.

Now, the $k$-th entry of $A'\theta$ is 
\begin{align}
\langle \left(\sum_{i=1}^s \phi^k(i)  \sum_{j=1}^s e^{c(i,j)} p(j|i) \phi(j)\right), \theta \rangle.
\end{align}
Similarly, the $k$-th entry of $B'\theta$ can be shown to be the $k$-th entry of $\sum_i \sum_j \pi(i)p(j|i)\left[\phi^T(i)\theta\right]\phi(i)$.

Now, the claim follows directly from \cite[Theorem 5.3]{basu} (the synchronous implementation).
\end{proof}
%\end{rmk}

%If the transition probability matrix is doubly stochastic then one can also get the following feature selection criterion 
%under the assumption ($\dagger$) made in \cite{basu}:
%\begin{align}
%\Phi(\Phi^T\Phi)^{-1}\Phi^T = I. \nonumber 
%\end{align}
%Note that due to the assumption made on $\Phi$, $\Phi^T\Phi$ becomes a diagonal matrix and hence invertible.

\section{Conclusion}
\label{sec7}
In this technical note we gave several new informative bounds on the function approximation error for the policy evaluation procedure in the context of risk-sensitive reinforcement learning. An important future 
direction will be to design and analyze suitable learning algorithms to find 
the optimal policy with the accompanying error bounds. It will be interesting to see whether one can  
use our bounds for the policy evaluation problem to provide error bounds for the full control problem as well.
\bibliography{mybib}
\bibliographystyle{IEEEtran}
\begin{comment}
 
\end{comment}
\end{document}